\newcommand{\name}{{Online Subspace Descent}}
\title{Memory-Efficient LLM Training with Online Subspace Descent}
\author{%
  Kaizhao Liang\textsuperscript{\dag}, Bo Liu\textsuperscript{\dag}, Lizhang Chen\textsuperscript{\dag}, Qiang Liu\textsuperscript{\dag} \\
  \dag The University of Texas at Austin\\
  \texttt{\{kaizhaol,bliu,lzchen,lqiang\}@utexas.edu} \\
}
\begin{document}

\maketitle

\begin{abstract}
Recently, a wide range of memory-efficient LLM training algorithms have gained substantial popularity. These methods leverage the low-rank structure of gradients to project optimizer states into a subspace using projection matrix found by singular value decomposition (SVD). However, convergence of these algorithms is highly dependent on the update rules of their projection matrix. In this work, we provide the \emph{first} convergence guarantee for arbitrary update rules of projection matrix. This guarantee is generally applicable to optimizers that can be analyzed with Hamiltonian Descent, including most common ones, such as LION, Adam. Inspired by our theoretical understanding, we propose Online Subspace Descent, a new family of subspace descent optimizer without SVD. Instead of updating the projection matrix with eigenvectors, Online Subspace Descent updates the projection matrix with online PCA. Online Subspace Descent is flexible and introduces only minimum overhead to training. We show that for the task of pretraining LLaMA models ranging from 60M to 7B parameters on the C4 dataset, Online Subspace Descent achieves lower perplexity and better downstream tasks performance than state-of-the-art low-rank training methods across different settings and narrows the gap with full-rank baselines.\footnote{Code is available at \url{https://github.com/kyleliang919/Online-Subspace-Descent}.}
\end{abstract}

\section{Introduction}
The continual advancement in training large language models (LLMs) presents a compelling challenge in balancing computational efficiency with model performance. As the scope and complexity of these models grow, so does the necessity for innovative strategies that optimize memory usage without compromising the learning capabilities of the model. Recent approaches in low-rank adaptation strategies, including Stochastic Subspace Descent ~\citep{kozak2019stochastic}, LoRA ~\citep{hu2021lora}, ReLoRA ~\citep{lialin2023relora}, Gradient Low-Rank Projection (GaLore)~\citep{zhao2024galore} and Sketchy ~\citep{feinberg2024sketchy} , have paved the way for memory-efficient training by utilizing a periodically updated low-rank projection matrix to manage parameter updates. In particular, GaLore and Sketchy both utilize expensive singular value decomposition to determine the projection matrix, whereas stochastic subspace descent suggests using random matrices as projection matrices and provides convergence analysis on convex functions and objectives. However, to the best of our knowledge, no one has offered any guarantee of convergence for this class of methods on non-convex functions and objectives.

In this work, we provide the first convergence guarantee for arbitrary update rules of the projection matrix. This guarantee is significant because it is broadly applicable to a wide range of optimizers that can be analyzed within the Hamiltonian descent framework~\cite{maddison2018hamiltonian}. By establishing this convergence guarantee, we demonstrate that our approach is not limited to specific or narrowly defined update rules, but can be extended to include many commonly used optimizers in the field. In particular, this includes popular algorithms such as LION \cite{chen2023lion} and Adam \cite{kingma2014adam}, which are widely used in various machine learning and optimization tasks. Our results therefore offer a robust theoretical foundation for understanding and analyzing the behavior of these optimizers, ensuring their effectiveness and reliability in diverse applications.

Inspired by our theoretical understanding, we introduce a novel family of memory-efficient optimizers named \name{}, which incorporates a dynamically changing projection matrix, replacing the conventional periodic updating approach (SVD) with online PCA. By allowing the projection matrix to evolve in response to the changing gradient landscape, \name{} enhances the model's ability to navigate the parameter space more effectively. This dynamic adaptation aligns more closely with the natural progression of learning in deep neural networks, which is characterized by changes in the importance of different characteristics and interactions as training progresses. Through extensive experiments and comparative analysis, we demonstrate that our approach presents lower perplexity in pretraining LLaMA models (ranging from 60M to 1B parameters) on the C4 dataset compared to state-of-the-art low-rank training methods, closing the perplexity gap with full-rank baselines on language model pretraining.

\begin{algorithm}[t!]
\caption{Online Subspace Descent} 
\begin{algorithmic}[1]
\State 
Required: Optimizer 
\texttt{OptimizerW}, learning rate $\epsilon_t^W$, weight decay $\lambda^W$ for model weights $\vv W_t$; and \{\texttt{OptimizerP}, 
$\epsilon_t^P$,  $\lambda^P$\} for the projection matrix $\vv P_t$. Proper initialization. 
\For {iteration $t$} 
\State 
Calculate gradient $\vv G_t = \dd L(\vv W_t)$; Update model weights $\vv W_t$ by 
\bb 
 (\hatvv \Delta_{t}, ~~  \hatvv S_{t}) = \mathtt{OptimizerW}(\vv P_t\tt \vv G_t,  \hatvv S_{t-1} ) , 
&& \vv  W_{t+1} = \vv W_t +  \epsilon_t^W (\vv P_t \hatvv \Delta_{t+1}  - \lambda^W \vv W_t) 
\ee 
\State 
Calculate $
\vv  G^{P}_t = \nabla L_{\vv G_t}(\vv P_t)$ for $L_{\vv G_t}(\cdot)$ in Eq~\eqref{equ:lu}; 
Update the projection $\vv P_t$ by 
\bb 
& (\vv \Delta^P_{t},  \vv S^P_{t}) 
=  \mathtt{OptimizerP}(\vv G_t^P , ~~ \vv S^P_{t-1}), 
&&
 \vv P_{t+1}  = \vv P_t + \epsilon_t^P (\vv \Delta^P_t - \lambda^P \vv \U_t) 
\ee 
\EndFor 
\State Remark: 
We added weight decay as a common heuristic.
\footnotetext{
We omit the decay term in our analysis as it can much complicate the learning dynamics. We leave the analysis with weight decay to future works.} 
We recommend using Adam for both optimizers, and 
set $\epsilon_t^P = \alpha\epsilon_t^W$ with a constant $\alpha$ (e.g., $\alpha = 5$), and $\lambda^W = \lambda^P$. \footnotemark
\end{algorithmic}
\label{alg:main}
\end{algorithm}


\section{Optimization Background}
The training of deep learning models reduces to an optimization problem 
$$
\min_{\vv\w}  L(\vv\w), 
$$
where $\vv \w$ is the set of weight matrices of the model.  For simplicity, we assume $\vv \w \in \RR^{n\times m}$ is a single matrix of size $(n,m)$ without loss of generality. 
For notation,  we write 
$\llangle\vv A, \vv B\rrangle = \trace(\vv A\tt \vv B)$ for inner products of matrices, 
and $\norm{A}^2 = \trace(\vv A \tt \vv A)$ the Frobenius  norm. 
We use $A\odot B$ to denote the elementwise product, 
and $A^{\odot2} = A\odot A $. 
\begin{exa}
Update rules of common optimizers: 
\begin{flalign*}
\text{Gradient Descent}:&~~~~~~
\vv W_{t+1} = \vv W_t - \epsilon_t \dd L(\vv W_t),&& &&&&&  \\ 
\text{Momentum}: & ~~~~~
\vv W_{t+1} = \vv W_t - \epsilon_t \vv M_t, &&
\vv M_{t} = (1-\beta) \dd L(\vv W_{t}) + \beta \vv M_{t-1}, &&&&&  \\
\text{Lion-$\mathcal K$~\citep{chen2023lion}}: 
& ~~~~~ 
\vv W_{t+1} = \vv W_t - \epsilon_t
\dd \mathcal K(\vv N_t), &&
\vv N_t = (1-\beta_1)\dd L(\vv W_t) + \beta_1\vv M_t &&&&&  \\ 
& && 
\vv M_{t} = (1-\beta_2) \dd L(\vv W_{t}) + \beta_2 \vv M_{t-1}, &&&&&  \\ 
\text{Adam~\citep{kingma2014adam}}:&~~~~~
\vv W_{t+1} = \vv W_t - \epsilon_t 
\frac{\vv M_t}{\sqrt{\vv V_t} + e}, &&
\vv M_{t} = (1-\beta_{1t}) \dd L(\vv W_{t}) + \beta_{1t} \vv M_{t-1}, &&&&&  \\
&&& 
\vv V_{t}= (1-\beta_{2t}) \dd L(\vv W_{t})^{\odot 2} + \beta_{2t}\vv V_{t-1},  &&&&& 
\end{flalign*}
where $\epsilon_t$ are step sizes,  
and $\vv M_t, \vv V_t$ are the first and second order momentum, 
and 
$\beta,\beta_{1},\beta_{2}$ are momentum coefficients in $(0,1)$, with $\beta_{it}=\frac{\beta_i - \beta_i^{t+1}}{1-\beta_{i}^{t+1}}$, $i=1,2$ for $\beta_1,\beta_2\in(0,1)$
in Adam, 
and $\mathcal K$ is any convex function with $\dd K(\vv 0) =\vv  0$ for Lion-$\mathcal K$~\citep{chen2023lion}, and Lion~\citep{chen2023symbolic} uses 
$\mathcal K(\vv x) = \norm{\vv x}_{1,1}$ and $\dd \mathcal K(\vv x) = \sign(\vv x)$. 
\end{exa}
These optimizers can be unifiedly viewed as updating  $\vv W_t$ together with an optimizer state $\vv S_t$: 
\bbb \label{equ:opt}
\vv w_{t+1} = \vv W_t +  \phi_t(\vv S_t), 
&& 
 \vv S_{t} =  \psi_t(\vv S_{t-1},  \dd L(\vv W_{t})), 
\eee  
with some mapping $\phi_t, \psi_t$.  
We have $\vv S_t = \vv M_t$ for momentum and 
 $\vv S_t = \{\vv M_t,\vv V_t\}$ for Adam. 
 Note that both $\vv M_t,\vv V_t$ are of the same size as the model weights \(\vv{W}_t\), resulting in high memory consumption for large models. This issue is particularly pronounced for Adam, which typically yields the best performance for large language models (LLMs) but incurs the highest memory cost due to the need to maintain both \(\vv{M}_t\) and \(\vv{V}_t\). One key challenge is to retain the high performance of Adam while enhancing its memory efficiency.

\paragraph{Hamiltonian+Descent} 
One powerful approach to studying the dynamic properties of optimizers is to examine their continuous-time ODE forms in the limit of infinitesimal step size. The continuous-time forms provide clearer insights into the asymptotic convergence of the algorithm, abstracting away the choices of step size, discretization, and stochastic errors. The underlying logic is that a "sound" optimizer should be guaranteed to converge to local optima of the loss, at least when using sufficiently small step sizes.

Inspired by ~\cite{chen2023lion, maddison2018hamiltonian}, 
we observe that 
the continuous-time form of many common optimizers 
yields a \emph{Hamiltian+Descent} structure,  
\bbb  \label{equ:hd}
\begin{split}
&\ddt \vv W_t =    \partial_{\vv S} H(\vv w_t, \vv S_t) - \Phi(\partial_{\vv W} H(\vv w_t, \vv s_t)) \\ 
&\ddt \vv  S_t = -   \partial_{\vv W} H(\vv w_t, \vv S_t) - \Psi(\partial_{\vv S} H(\vv w_t, \vv S_t)),
\end{split} 
\eee  
where $H(\vv W, \vv S)$ is a Hamiltonian (or Lyapunov) function 
that satisfies 
$$
\min_{\vv S} H(\vv W, \vv S) = L(\vv W), ~~~~\forall \vv W, 
$$
so that minimizing $L(\vv W)$ reduces to minimizing $H(\vv W, \vv S)$;
and $\Phi(\cdot), \Psi(\cdot)$ are two monotonic mappings satisfying 
\bb 
\norm{\vv X}_{\Phi}^2 \defeq \langle\vv X,  \Phi(\vv X)\rangle\geq0, && 
\norm{\vv X}_{\Psi}^2 \defeq \langle\vv X,  \Psi(\vv X)\rangle\geq0, &&
\forall \vv X.
\ee 
With $\Phi(\vv X) = \Psi(\vv X) = 0$, 
the system in \text{\eqref{equ:hd}} reduces to the standard Hamiltonian system that keeps  $H(\vv W_t, \vv S_t) = const$ along the trajectory. 
When adding the  descending components with $\Phi$ and $\Psi$, the system then keeps $H(\vv W, \vv S)$ monotonically non-decreasing: 
\bbb  \label{equ:dh}
\begin{split}
\ddt H(\vv W_t, \vv S_t)
& = \llangle \partial_{\vv W} H_t, \ddt \vv W_t\rrangle + \llangle \partial_{\vv S} H_t, \ddt \vv S_t\rrangle  \\ 
& =\llangle \partial_{\vv W} H_t,  
  \partial_{\vv S} H_t - 
 \Phi(\partial_{\vv W} H_t ) 
 \rrangle + \llangle \partial_{\vv S} H_t, -  \partial_{\vv W} H_t - 
 \Psi(\partial_{\vv S} H_t )  \rrangle  \\  
& = - \norm{\partial_{\vv W} H_t}_{\Phi}^2- \norm{\partial_{\vv S} H_t}_{\Psi}^2 \leq 0, 
 \end{split}
 \eee  
 where we write 
 $\partial_{\vv W} H_t = \partial_{\vv W}H(\vv W_t, \vv S_t) $ and similarly for $\partial_{\vv S} H_t$. The main idea is that the cross terms  $\langle \partial_{\vv W} H_t, \partial_{\vv S} H_t\rangle $ 
are canceled, leaving only the negative terms. 
 
\begin{exa}
The momentum method yields 
following continuous-time form and  Hamiltonian: 
\bb 
\ddt  \vv w_t = - \vv m_t, 
&&&&& 
\ddt  \vv m_t =a (\dd L(\vv w_t) - \vv m_t), 
&&&&&&
\text{with}~~~~~~
H(\vv W, \vv  M) = 
 L(\vv W) + \frac{\norm{\vv M}^2}{2a}. 
\ee
\end{exa}
\begin{exa} 
Adam~\citep{kingma2014adam} yields the following continuous-time form and Hamiltonian, 
\bb 
\ddt {\vv W_t} = -\frac{\vv m_t}{\sqrt{\vv v_t}+e},   && 
\ddt{\vv m}_t = a(\dd L(\vv w_t) - \vv m_t), && \ddt \vv v_t = b(\dd L(\vv w_t)^{\odot2} - \vv v_t), 
\ee
\bb 
\text{with}~~~~~~
H(\vv W, \vv m, \vv v) 
= L(\vv w) + \frac{1}{2a}
\llangle \frac{\vv M}{\sqrt{\vv V} +e}, ~ \vv M\rrangle,
 \ee 
 for which we can show that  $\ddt H(\vv W_t, \vv m_t, \vv v_t)  \leq 0$ when $a \geq  b/4$. 
\end{exa}

\begin{exa}
The Lion-$\mathcal K$ optimizer~\citep{chen2023symbolic,chen2023lion} (without weight decay) can be written into 
\bb 
\ddt \vv W_t = \dd \mathcal K ((1-b)\vv M_t -  b\dd L(\vv W_t)),&& 
\ddt \vv M_t = -a ( \dd L(\vv W_t) + \vv M_t), 
\ee 
where $a\geq 0, $ $b \in[0,1]$ and $\mathcal K(\vv X)$ is any  convex function that attains the minimum at $\vv X=0$. 
One of its Hamiltonians that yields the Hamiltonian+descent structure 
(Eq~(13) in Chen et al.~\citep{chen2023lion}) is 
$$
H(\vv W, \vv M) = a L(\vv W) +  \frac{1}{1- b} \mathcal K((1-b)  \vv M). 
$$
\end{exa}


\section{Memory-Efficient Optimizers via Online Subspace Descent}

We introduce the
idea of constructing memory efficient optimzers 
by descending in the subspaces that dynamically changes across iterations 
as motivated by GaLore~\citep{zhao2024galore} and Sketchy~\citep{feinberg2024sketchy}.
We first derive \emph{static} subspace descent 
by restricting the whole optimization on a subspace (Section~\ref{sec:static}), 
and then propose to dynamically change the subspace across iterations as a heuristic to 
attain the optimization in the full space
while only using subspace descent (Section~\ref{sec:dynamic}). 
In particular, we propose 
to update the subspaces 
via continuous online PCA like updates
to avoids the need of exact 
SVD like in GaLore and Sketchy (Section~\ref{sec:update}). 
Finally,
we remark 
in Section~\ref{sec:difficulty} the heuristic nature of the derivation of the method and highlight the difficulty in theoretical understanding,
which motivates our analysis based on Hamiltonian dynamics in Section~\ref{sec:theory}. 


\subsection{Static Subspace Descent}
\label{sec:static}
One popular approach to improving memory efficiency is to confine the optimization to a low-dimensional space. To do this, we impose a low rank structure of \(\vv{W} = \vv{\U} \vv{\hat{W}}\), where \(\vv{\U} \in \mathbb{R}^{n \times k}\) is a projection matrix to be determined later and \(\vv{\hat{W}} \in \mathbb{R}^{k \times m}\) is a dimension-reduced parameter. When \(k \ll n\), \(\vv{\U}\) and \(\vv{\hat{W}}\) are much smaller in size compared to \(\vv{W}\). Now consider
$$
\min_{\vv{\hat{W}}} L(\vv{\U} \vv{\hat{W}}).
$$
Applying the optimizer from \text{\eqref{equ:opt}} to update \(\vv{\hat{W}}\)
along with an optimizer state \(\vv{\hat{S}}\), and mapping the update rule 
$\hatvv w_{t+1} = \hatvv W_t +  \phi_t(\hatvv S_t)$
to that of \(\vv{W} = \vv P \hatvv W_t\), we get
\begin{align}
\vv w_{t+1} = \vv W_t +  \vv \U \phi_t(\vv \hatS_t), 
&& 
 \vv{\hat S}_{t} =  
 \psi_t(\vv {\hat S}_{t-1}, \vv \U \tt \dd L(\vv W_t)),
 \label{equ:static}
\end{align}
where we used the fact that 
$\dd_{\vv \hatW} L(\vv \U \vv\hatW) =\vv \U \tt \dd_{\vv W} L(\vv W)$.
This yields a more memory-efficient optimizer, 
as
the size of $\vv \hatS_t$ is proportional to that of $\hat W_t$, much smaller than $\vv S_t$ in  \text{\eqref{equ:opt}} when $k \ll n$.

\subsection{Online Subspace Descent} 
\label{sec:dynamic}
With a static $\vv P$, regardless of its values, 
the parameter $\vv W$ is restricted to have a low rank structure. Although low rank assumption is proved to be useful for fine-tuning with LoRA-like methods~\citep{hu2021lora},  
it is often too limited for pre-training or when the desirable model weights are not inherently low-rank. 

To address this problem, Zhao et al.~\citep{zhao2024galore} suggested to keep the projected updated in \eqref{equ:static}, but use different $\vv P$ across the iterations:
\bb
\vv w_{t+1} = \vv W_t +  \vv \U _t \phi_t(\vv \hatS_t), 
&& 
 \vv{\hat S}_{t} =  
 \psi_t(\vv {\hat S}_{t-1}, \vv \U _t \tt \dd L(\vv W_{t})),
 && 
 \vv \U _{t+1} = \chi_t(\vv \U _{t}, \vv W_{t}, \vv \hatS_{t}), 
\ee
where $\chi_t$ is a update rule of $\vv \U _t$ that will be determined in the sequel. 
The intuition is to open up different projection directions at different iterations, so that optimization can be conducted in different subspaces across different iterations. 
This is similar to the update of coordinate descent, except in a continuous fashion. 
Note that the update of $\vv P_t$ can be done in parallel 
with that of $(\vv W_t, \hatvv S_t)$, and incurs no slowdown once it is fast enough to not cause a speed bottleneck.  

\begin{exa}
Examples of common optimizers equipped with  online subspace descent:  
\begin{flalign*}
\text{Gradient Descent}:&~~~~~~
\vv W_{t+1} = \vv W_t - \epsilon_t \vv \U _t \vv \U _t\tt \vv G_t,&&
\vv G_t  = \dd L(\vv W_t),  
&&&&&  \\ 
\text{Momentum}: & ~~~~~
\vv W_{t+1} = \vv W_t - \epsilon_t  \vv \U _t \hatvv M_t, &&
\hatvv M_t = (1-\beta) \vv \U _t\tt \vv G_t + \beta \hatvv M_{t-1}, &&&&&  \\
\text{Lion-$\mathcal K$}:
& ~~~~~ 
\vv W_{t+1} = \vv W_t - \epsilon_t
\vv P_t \dd \mathcal K(\hatvv N_t), &&
\hatvv G_t  = \vv P_t \tt \vv G_t 
&&&&&  \\ 
& ~~~~~ 
\hatvv N_t = (1-\beta_1)  \hatvv G_t + \beta_1\hatvv M_{t}, &&
\hatvv M_t = (1-\beta_2) \hatvv G_t + \beta_2\hatvv M_{t-1}, &&&&&  \\ 
\text{Adam}:
&~~~~~
\vv W_{t+1} = \vv W_t - \epsilon_t 
\vv P_t \frac{\hatvv M_t}{\sqrt{\hatvv V_t} + e}, &&
\vv {\hat G}_t =\vv \U _t \tt  \vv G_t, 
 &&&&&  \\
&~~~~~
\hatvv M_t = (1-\beta_{1t})\vv {\hat G}_t + \beta_{1t} \hatvv M_{t-1},
&& 
\hatvv V_t = (1-\beta_{2t}) \vv{\hat G}_t^{\odot2} + \beta_{2t}\hatvv V_{t-1}.  &&&&& 
\end{flalign*}
\end{exa}



 \label{sec:update}
How Should $\vv \U_t$ be Updated? 
It is useful to draw intuition from the projected gradient descent rule 
\bbb \vv W_{t+1} = \vv W_t -\epsilon_t \vv \U _t \vv \U _t \tt \vv G_t, ~~~~~~~ \vv G_t = \dd L(\vv W_t), 
\label{equ:gd}
\eee  
in which $\vv \U _t \vv \U _t\tt$ can be viewed as a low rank preconditioning of $\vv G_t$. 
To make it follow the exact gradient descent, we hope to make $\vv \U _t \vv \U _t\tt \vv G_t$ approximate $\vv G_t$ as much as possible. 
In Galore, this is achieved by performing 
singular value decomposition (SVD) 
on $\vv G_t$ periodically 
every $T$ 
iterations: 
$$
\vv \U_t,~\_, ~\_ = 
\texttt{torch.linalg.svd}
(\vv G_{T\lfloor t/T\rfloor}),
$$
where $T\lfloor t/T\rfloor$ is  the largest multiple of $T$ less than or equal to $t$. 
However, numerical SVD incurs a large computational cost for very large models. Also, since $\vv \U_t$ is fully determined by $\vv G_{T\lfloor t/T\rfloor}$ calculated from a single mini-batch at the last periodic point, it does not incorporate the gradient information from all data in a timely fashion. 

In this work, we propose to update $\vv P_t$ in a continuous online fashion that incorporates the most recent gradient information in a timely fashion, without calling torch.linalg.decompositions routines. 
We view the update of $\vv P_t$ 
as conducting an online principal component analysis (PCA) based on the streaming of $\{\vv G_t\}$. 
In particular, 
we propose to update $\vv P_t$  at time $t$ by minimizing the following PCA objective:  
\bbb \label{equ:lu} 
L_{\vv G_t}(\vv \U ) = \norm{\vv \U \vv \U \tt \vv{\tilde g}_t - \vv {\tilde g}_t}^2 + \lambda  \norm{\vv \U \tt\vv \U  - \vv I_{k\times k}}^2, ~~~~~
\vv{\tilde G}_t = \frac{\vv G_t}{\norm{\vv G_t}}, 
\eee  
where $\norm{\vv A} =\trace(\vv A\tt \vv A)^{1/2}$ and $\vv I_{k\times k}$ is the $k\times k$ identity matrix; we introduced an auxiliary loss to encourage the columns of $\vv \U $ to be orthonormal 
and normalizes $\vv G_t$ to increase stability. 

The key property of $L_{\vv G_t}(\vv \U )$  in 
\text{\eqref{equ:lu}}  is that 
all its stable local minimum is a global minimum,
and $\vv \U $ is a global minimum iff $\vv \U \vv \U \tt \vv {\tilde G}_t$ forms the optimal rank-$k$ approximation of $\vv {\tilde G}_t$ \citep[e.g.,][]{baldi1989neural}; moreover, 
we have $\vv \U \tt \vv \U = I_{k\times k}$ at optima when $\lambda >0$. 

Instead of minimizing $L_{\vv G_t}(\vv P)$ exactly,
to retain computational efficiency, 
we propose to update $\vv P_t$ by only performing one step of optimization on $L_{\vv G_t} (\vv P)$: 
$$
\vv \U _{t+1} =\texttt{OptimizerP.step}(\vv P_t, ~~~ \dd_{\vv P} L_{\vv G_t}(\vv \U _t)), 
$$
where $\texttt{OptimizerP.step}$ can be a favorite optimizer, such as gradient descent or Adam. 
Note that when using Adam, 
we introduce a copy of optimizer state $\vv S_t^P$ for $\vv \U _t$. 
See Algorithm~\ref{alg:main}. 
Compared to the exact SVD, each online update of $\vv P_t$  here is fast
and can be executed in parallel with the $(\vv W_t, \hatvv S_t)$ updates to avoid slowdown.

\subsection{Difficulty in Theoretical Understanding}
\label{sec:difficulty} 
The idea above of projecting an arbitrary optimizer with a dynamically changing $\vv{P}_t$ is heuristically motivated and lacks an immediate rigorous theoretical justification. The main challenge lies in the complex interaction between the update of $\vv{U}_t$ and the optimization state $\vv{S}_t$, which could potentially degrade the convergence and other theoretical properties of the original optimizer. A key question is whether we can develop a theoretical framework to understand how $\vv{P}_t$ impacts the optimizer's convergence behavior and provide guidance for the design of the update rules of $\vv{P}_t$.

To gain understanding, 
it is useful to  first exam the simple case of projected gradient descent in \eqref{equ:gd} 
which does not have an optimizer state ($\vv S_t = \emptyset$). 
In this case, since $\vv P_t \vv P_t\tt$ is positive semi-finite, 
the update $\vv P_t \vv P_t\tt \vv G_t$ is always non-increasing direction of $L(\vv W)$ for any $\vv P_t$. 
The algorithm is essentially 
a variant of coordinate or subspace descent, 
where $\vv P_t$ defines the subspace on which 
one step of gradient descent is conduced at iteration $t$. 
To ensure that \eqref{equ:gd}  finds a local optimum, 
we mainly need to 
ensure that $\vv P_t \vv G_t = 0$ 
only if $\vv G_t = 0$ to 
prevent the optimizer from stopping prematurely; this is a mild condition that can be satisfied e.g. when $\vv P_t$ is updated by (online) PCA on $\vv G_t$. 

Unfortunately, 
this coordinate-descent-like interpretation does not apply to more advanced optimizers that track a momentum state $\vv S_t$. This is because $\vv S_t$ accumulates the information from the projected gradient $\vv P_{\tau} \vv G_{\tau}$ at all earlier iterations $\tau \leq t$. As $\vv P_t$ changes across time, it is unclear whether the gradient projected to different subspaces $\vv P_\tau$ would be coherent with each other, and useful for future updates that are conducted in different subspaces $\vv P_t$ for $t>\tau$.  
The difficulty is the inertia effect of $\vv S_t$ that entangles the  different subspaces, 
making the dynamic behavior fundamentally more complicated than naive coordinate descent where the descent in different subspaces is uncoupled. 
This is what we address in Section~\ref{sec:theory} via the Hamiltonian descent framework.


\vspace{-.5\baselineskip}
\section{Hamiltonian Descent  Meets Subspace Descent: A Lyapunov Analysis}
\label{sec:theory} 
\vspace{-.5\baselineskip}

In this section, we show a surprising result that the complication
outlined above in Section~\ref{sec:difficulty} \emph{is not} a problem for optimizers that yields the Hamiltonian+descent structure in \eqref{equ:hd}. 
Our result is two-fold: 

\noindent$\bullet$ 
Section~\ref{sec:preserve}: 
When applying Online Subspace Descent on 
systems in \eqref{equ:hd}, the Hamiltonian+descent structure is preserved once the update rule of $\vv P_t$ has a smooth continuous-time limit. 
Hence, under very mild conditions, 
Online Subspace Descent equipped  with common optimizers like Adam and Lion automatically 
yield a Lyapunov function and hence benign continuous-time convergence. 
Moreover, $\vv P_t$ can, in fact, be generalized to an arbitrary linear operator as shown in Section~\ref{sec:general}.

\noindent$\bullet$ 
Section~\ref{sec:converge}: 
For any smooth  $\vv P_t$  update rules 
that eliminates the degenerate case of $\P_t\tt \vv G_t =0$ while $\vv G_t = 0$ at convergence, 
the online subspace optimizer guarantees to converge in continuous time to a stationary point of the loss $L(\vv W)$. This mild condition is satisfied, for example, when $\vv P_t$ is updated by a typical optimizer on the online PCA objective $L_{\vv G_t}(\vv P)$. 

\vspace{-.5\baselineskip}
\subsection{Online Subspace Descent Preserves the Hamiltonian+Descent Structure}
\label{sec:preserve}
\vspace{-.5\baselineskip}

Applying dynamic projection to Hamiltonian descent in \eqref{equ:hd}, 
we obtain the following  systems: 
\bbb \label{equ:hadyp}
\begin{split}
& \ddt \vv W_t =    \vv \U_t \partial_{\vv \hatS} H(\vv w_t, \vv \hatS_t) - \Phi(\partial_{\vv W} H(\vv w_t, \hatvv s_t))  \\
&\ddt \vv  \hatS_t = -     \vv \U_t\tt \partial_{\vv W} H(\vv w_t, \vv \hatS_t) - \Psi(\partial_{\vv \hatS} H(\vv w_t, \vv \hatS_t)) \\
&\ddt \vv \U_t = \Gamma(\vv \U_t, 
 \dd L(\vv W_t)),
 \end{split}
\eee  
where $\Gamma $ specifies the update rule of $\vv P_t$. 
Following essentially the same derivation as \eqref{equ:dh}, 
one can show that $H(\vv W, \vv S)$ remains a Lyapunov function of \eqref{equ:hadyp}, regardless of the choice of $\Gamma$: 
\bbb \label{equ:dhss}
\begin{split}
\ddt H(\vv W_t, \hatvv S_t)
& = - \norm{\partial_{\vv W} H_t}_{\Phi}^2- \norm{\partial_{\vv S} H_t}_{\Psi}^2 + 
\langle \partial_{\vv W} H_t, \vv P_t \partial_{\hatvv S } H_t \rangle 
- \langle \partial_{\hatvv S} H_t, \vv P_t\tt  \partial_{\vv W} H_t \rangle \\ 
& = - \norm{\partial_{\vv W} H_t}_{\Phi}^2- \norm{\partial_{\vv S} H_t}_{\Psi}^2 \leq 0, 
\end{split} 
\eee 
where the key is to use the \emph{adjoint} property of $\vv P$ and $\vv P\tt $ that  $\langle \vv P_t \vv X, \vv Y\rangle = 
\langle \vv X, \vv P_t\tt \vv Y\rangle$, which cancels the crossing terms, 
independent of the values of $\vv P_t$. 
There is no requirement on $\vv \Gamma$ here, 
besides that the derivative in \eqref{equ:dhss} should exist. 
As shown in  Section~\ref{sec:general},
we can generalize \eqref{equ:dhss} 
by replacing $\vv P_t$ and $\vv P_t\tt$ 
with a general linear operator $\mathcal P_t$ and its adjoint $\mathcal P_t^*$. 

The following are 
examples of 
continuous-time Momentum, Lion-$\mathcal{K}$ and Adam with subspace descent and their Hamiltonian functions.

\begin{exa}
Momentum + {Online Subspace Descent} is 
\bb
\ddt \vv w_t = -\vv \U_t \hatvv m_t , && \vv{\hat g}_t = \vv \U_t\tt  \dd L(\vv w_t), && \ddt \vv {\hat m}_t = a (\vv{\hat g}_t - \hatvv m_t), 
\ee 
\begin{flalign*}
\text{with Lyapunov function} & \quad H(\vv w, \hatvv m) = L(\vv w) + \frac{\lVert {\vv{\hat M}\rVert}^2}{2a}, \text{~~for which} & 
\end{flalign*}
\bb 
\ddt H(\vv w_t,\vv {\hat m}_t) 
& = - \dd L(\vv w_t) \tt \vv \U_t \hatvv m_t +  \hatvv m_t\tt (\vv \U_t\tt \dd L(\vv w_t) -\hatvv  m_t) 
 = -  \norm{\hatvv m_t}_2^2\leq0.
\ee 
\end{exa}

\begin{exa}
Adam + {Online Subspace Descent} is 
\bb 
\ddt \vv w_t =  \vv \U_t \frac{\hatvv m_t}{\sqrt{\hatvv v_t}+e},   &&  \vv{\hat G}_t =  \vv \U_t \tt \dd L(\vv w_t), && 
\ddt \hatvv m_t = a(\vv{\hat g}_t - \hatvv m_t), && \ddt \hatvv v_t = b(\vv {\hat g}_t^2 -\hatvv  v_t). 
\ee 
\begin{flalign*}
\text{with Lyapunov function}& \quad 
H(\vv W, \vv m, \vv v) 
= L(\vv w) + \frac{1}{2a}
\left.\llangle \frac{\hatvv M}{\sqrt{\hatvv V} +e}, ~ \hatvv M \right.\rrangle, 
 \text{~~for which} &
\end{flalign*}
\bb 
& \ddt H(\vv w_t, \hatvv m_t, \hatvv v_t)  \\ 
& = -\llangle \vv g_t, \vv \U_t \frac{\hatvv m_t}{\sqrt{\hatvv v_t}+e} \rrangle 
+ \frac{1}{a}\llangle 
\frac{\hatvv m_t}{\sqrt{\hatvv v_t} + e},~~ a (\vv \U_t\tt  \vv g_t -\hatvv m_t) \rrangle 
-  \frac{b}{4a } \llangle\frac{\hatvv m_t^{\odot 2}}{ \sqrt{\hatvv v_t}\odot (\sqrt{\hatvv v_t} + e)^{\odot 2}},~~ (\vv{\hat g}_t^{\odot 2} - \hatvv v_t) \rrangle \\ 
& = 
- \llangle  1 - \frac{b}{4a}\frac{\sqrt{\hatvv v_t}}{\sqrt{\hatvv v_t} + e},~ \frac{
\hatvv m_t^{\odot 2}}{\sqrt{\hatvv v_t }+ e} \rrangle 
-  \frac{b}{4a} \llangle \frac{\hatvv m_t^{\odot 2} }{ \sqrt{\hatvv v_t}\odot (\sqrt{\hatvv v_t} + e)^{\odot 2}} , ~ \hatvv G_t^{\odot 2} \rrangle   \\ 
& \leq 
- \left(1 - \frac{b}{4a}\right) 
 \norm{\frac{\hatvv m_t}{\sqrt{\sqrt{\hatvv v_t }+ e}}}^2 - \frac{b}{4a} \norm{ \frac{\hatvv m_t \hatvv g_t}{ \sqrt[4]{\hatvv v_t}{(\sqrt{\hatvv v_t} + e})} }^2  \leq 0, 
\ee 
where we assume $a \geq b/4$. 
\end{exa}

\begin{exa}
The Lion-$\mathcal K$  + Online Subspace Descent is 
\bb 
\ddt \vv W_t = \vv \U_t \dd \mathcal K ((1-b)\hatvv M_t 
-  b\hatvv G_t),&& 
\ddt \vv M_t = -a ( \hatvv G_t + \hatvv M_t),  &&
 \hatvv G_t = \vv\U_t\tt \dd L(\vv W_t) 
\ee 
Consider the Hamiltonian function in Eq~(13) of \text{\citep{chen2023lion}}: 
$$
H(\vv W, \hatvv M) = a L(\vv W) +  \frac{1}{1-b} \mathcal K((1-b)  \hatvv M). 
$$
\bb 
\ddt H(\vv W_t, \vv M_t)
& =  a
\langle  \vv G_t, \vv \U_t \dd \mathcal K ((1-b)\hatvv{M}_t - b \hatvv G_t) 
\rangle  
- a \langle \dd \mathcal K((1-b) \hatvv M_t), \hatvv G_t + \hatvv M_t  \rangle  \\
& = a \langle \hatvv G_t, \dd \mathcal K ((1-b)\hatvv{M}_t - b \hatvv G_t) - \dd \mathcal K((1-b) \hatvv M_t)  \rangle - a \langle \nabla \mathcal K((1-b) \hatvv M_t), \hatvv M_t\rangle  \\  
& = - \frac{a}{b} [(1- b) \hatvv M_t; ~ -b \hatvv G_t]_{\dd \mathcal K}- \frac{a}{(1- b)} [\vv 0; ~ (1-b) \hatvv M_t]_{\dd \mathcal K}\leq 0
\ee 
where we defined $[\vv X; \vv Y]_{\dd \mathcal K} = \langle \vv Y, ~ \dd\mathcal K(\vv X + \vv Y) - \dd\mathcal K(\vv X)\rangle$ and used the fact that $[\vv X; \vv Y]_{\dd \mathcal K} \geq 0$ by the convexity of $\mathcal K$; we used $\langle \vv G_t, \vv P_t \vv X_t \rangle 
= \langle \vv P_t\tt \vv G_t, \vv X_t\rangle  = \langle \hatvv G_t, \vv X_t\rangle$. 
\end{exa}
In all examples above, 
although the form of Hamiltonian $H(\vv W, \hatvv S)$ 
is independent of the update rule of $\vv P_t$, 
the decreasing rate $\ddt H(\vv W_t, \hatvv S_t)$ depends on $\vv P_t$  in a complicate way through 
 $\hatvv M_t, \hatvv V_t$, $\hatvv G_t$. An interesting direction for future investigation is to find optimal rules of $\vv P_t$ to maximize the decreasing rate as an optimal control problem.

\subsection{Convergence to Local Optima}
\label{sec:converge} 
In addition to the Lyapunov structure, 
we need an additional mild condition on the update rule of $\vv P_t$ 
to ensure the system converges to the local optimum of the loss $L(\vv W)$. 
The main idea is to prevent the system from stopping prematurely before reaching zero gradient $\vv G_t = 0$ by excluding the degenerate case of $\P_t \vv G_t =0$ while $\vv G_t\neq 0$ in the invariant set of the system. 


\begin{ass}\label{def:ad}
Assume the functions in 
system \text{\eqref{equ:hadyp}} are continuously differentiable and 

i) $\ddt H(\vv W_t, \hatvv S_t) =0$ implies $\hatvv  G_t =  \vv P_t\tt \dd L(\vv W_t) =0$ and $\ddt \vv W_t = 0$. 

ii) 
When  $\vv G_t \equiv \vv G\neq 0$,
the set 
$\{\vv P \colon \vv P\tt \vv G  = 0\}$ is not a positive invariant set of $\ddt \vv \U_t = \Gamma(\vv \U_t, \vv G_t)$. 
\end{ass}

This is a mild condition. 
Assumption i) 
says that the optimizer 
should stop at a point with $\hatvv G_t=0$, which 
is easy to verify for the common optimizers like momentum, Adam, Lion-$\mathcal K$. 
Assumption ii) ensures $\hatvv G_t = 0$ would imply $\vv G_t=0$ in  invariance sets, 
which is satisfied when for example, $\vv P_t$ is updated by a reasonable optimizer of the online PCA loss that converges to a stable local minimum. 

\begin{thm}
\label{thm:main}
Assume Assumption \text{~\ref{def:ad}} holds. 
Let 
$(\vv W_t, \vv S_t, \vv P_t)_{t}$ 
be a bounded solution of \eqref{equ:hadyp}, 
then all the accumulation points $\{\vv W_t \}$ as $t\to +\infty$ 
are stationary points of $L(\vv W)$. 
\end{thm}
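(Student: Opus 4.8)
The plan is to invoke LaSalle's invariance principle with $H$ as a Lyapunov function. By \eqref{equ:dhss}, $\ddt H(\vv W_t, \hatvv S_t) \le 0$ along any solution of \eqref{equ:hadyp}, so $t \mapsto H(\vv W_t, \hatvv S_t)$ is non-increasing. Since $(\vv W_t, \hatvv S_t, \vv \U_t)_t$ is bounded by hypothesis and $H$ is continuous (continuously differentiable by Assumption~\ref{def:ad}), $H$ is bounded along the orbit, hence bounded below; a non-increasing, bounded-below function converges, so $H(\vv W_t, \hatvv S_t) \to H_\infty$ as $t \to +\infty$. Because the state lives in a finite-dimensional matrix space, the bounded orbit is precompact, so its $\omega$-limit set $\Omega$ is nonempty, compact, and invariant, and the accumulation points of $\{\vv W_t\}$ are exactly the $\vv W$-components of points of $\Omega$.

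Next I would locate $\Omega$ inside the LaSalle set $E = \{(\vv W, \hatvv S, \vv \U) : \ddt H = 0\}$. Since $H \equiv H_\infty$ on the invariant set $\Omega$, every trajectory issuing from a point of $\Omega$ satisfies $\ddt H \equiv 0$, i.e. $\Omega \subseteq E$. Assumption~\ref{def:ad}(i) then gives, along any such trajectory, $\hatvv G_t = \vv \U_t\tt \dd L(\vv W_t) = 0$ and $\ddt \vv W_t = 0$. The condition $\ddt \vv W_t \equiv 0$ forces $\vv W_t \equiv \vv W^*$ to be constant on the trajectory, so that $\vv G_t \equiv \vv G^* \defeq \dd L(\vv W^*)$ is constant as well.

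It then remains to exclude $\vv G^* \neq 0$, which is where Assumption~\ref{def:ad}(ii) enters. Suppose for contradiction $\vv G^* \neq 0$. With $\vv W$ and $\vv G$ frozen, the projection dynamics on $\Omega$ decouple into the autonomous system $\ddt \vv \U_t = \Gamma(\vv \U_t, \vv G^*)$, while Assumption~\ref{def:ad}(i) simultaneously forces $\vv \U_t\tt \vv G^* = \hatvv G_t = 0$ for all $t$. Thus the forward orbit of this trajectory never leaves $\{\vv \U : \vv \U\tt \vv G^* = 0\}$ and is itself a nonempty positively invariant subset of that set under $\ddt \vv \U = \Gamma(\vv \U, \vv G^*)$ — exactly what Assumption~\ref{def:ad}(ii) is meant to preclude when $\vv G^* \neq 0$. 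This contradiction yields $\vv G^* = \dd L(\vv W^*) = 0$, so every accumulation point $\vv W^*$ is a stationary point of $L$, as claimed.

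The step I expect to be most delicate is this final contradiction: reconciling the literal wording of Assumption~\ref{def:ad}(ii) with what the invariance argument actually produces. LaSalle hands us an orbit \emph{trapped} inside $\{\vv \U : \vv \U\tt \vv G^* = 0\}$, i.e. a nonempty positively invariant \emph{subset}, whereas a set can fail to be positively invariant yet still contain a positively invariant subset; so the argument requires reading (ii) in the stronger sense that $\Gamma$ admits no complete trajectory lying entirely in $\{\vv \U : \vv \U\tt \vv G = 0\}$ for $\vv G \neq 0$. The remaining points are routine: boundedness plus the $C^1$ regularity in Assumption~\ref{def:ad} guarantees a well-defined flow and a genuinely compact, invariant $\Omega$, and continuity of $\Gamma$ is what legitimizes treating the restricted $\vv \U$-dynamics as autonomous once $\vv W^*$ (hence $\vv G^*$) is frozen.
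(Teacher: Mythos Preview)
Your proposal is correct and follows essentially the same argument as the paper: apply LaSalle's invariance principle with $H$ as the Lyapunov function, use Assumption~\ref{def:ad}(i) on the limit set to freeze $\vv W_t$ (hence $\vv G_t$), and then invoke Assumption~\ref{def:ad}(ii) to rule out $\vv G \neq 0$. Your observation that the contradiction step really needs ``no complete trajectory lies in $\{\vv P : \vv P\tt \vv G = 0\}$'' rather than the literal ``the set is not positively invariant'' is more careful than the paper's own proof, which makes the same inferential leap without comment.
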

\begin{proof}
By LaSalle's invariance principle,
the positive limit set of $(\vv W_t, \vv S_t, \vv P_t)_{t}$ must be contained in $\mathcal I,$ where 
$\mathcal{I} = 
\{\text{the union of complete trajectories 
satisfying $ \ddt H(\vv W_t, \hatvv S_t) =0$, $\forall t$ 
}\}.
$

From the Assumption i),
the trajectories contained in 
 $\mathcal I$ must satisfy 
 $\ddt \vv W_t = 0$, which implies 
 $\ddt \vv G_t = \ddt \dd L(\vv W_t) = 0$ and $\hatvv G_t = 0$ and hence 
 $\vv G_t \equiv \vv G$ is a constant with $\P_t\tt \vv G =0$. 
 Moreover, from Assumption ii), 
 we must have $\dd L(\vv W_t) = \vv G_t\equiv 0$, since otherwise the trajectory is not invariant. As a result,
 all trajectories in the limit set $\mathcal I$ must have $\dd L(\vv W_t) = 0$. Because $\ddt W_t=0$, 
 these trajectories are static points of  $\vv W_t$. 
\end{proof} 

\subsection{Online Subspace Descent with General Linear Projection Operators}
\label{sec:general}

We can generalize the online subspace descent with general linear operators: 
\bb
\begin{split}
& \ddt \vv W_t =    \mathcal \U_t (\partial_{\vv \hatS} H(\vv w_t, \vv \hatS_t)) - \Phi(\partial_{\vv W} H(\vv w_t, \hatvv s_t))  \\
&\ddt \vv  \hatS_t = -     \mathcal  \U_t^*(\partial_{\vv W} H(\vv w_t, \vv \hatS_t)) - \Psi(\partial_{\vv \hatS} H(\vv w_t, \vv \hatS_t)) \\
&\ddt \mathbf{\mathcal{\U}}_t = \Gamma(\mathcal{\U}_t, 
 \dd L(\vv W_t)),
 \end{split}
\ee
where we generalize $\vv P_t$ to be any linear operator $\mathcal P_t$ with an adjoint operator $\mathcal P_t^*$, 
satisfying 
$$\langle \vv X, \mathcal P_t (\vv Y)\rangle = 
\langle \mathcal P_t^*(\vv X), \vv Y\rangle,~~~~\forall \vv X, \vv Y.$$
The derivation of Lyapunov follows a similar way: 
\bb 
\begin{split}
\ddt H(\vv W_t, \hatvv S_t)
& = - \norm{\partial_{\vv W} H_t}_{\Phi}^2- \norm{\partial_{\vv S} H_t}_{\Psi}^2 + 
\langle \partial_{\vv W} H_t, \mathcal P_t (\partial_{\hatvv S } H_t) \rangle 
- \langle \partial_{\hatvv S} H_t, \mathcal P_t^*(  \partial_{\vv W} H_t )\rangle \\ 
& = - \norm{\partial_{\vv W} H_t}_{\Phi}^2- \norm{\partial_{\vv S} H_t}_{\Psi}^2 \leq 0, 
\end{split} 
\ee
where the crossing terms are again canceled due to the adjoint property. 

As an example of the general framework, 
consider $\mathcal P_t(\vv X) = \vv P_t \vv X \vv Q_t$, where $\vv Q_t$ is another projection matrix applied on the different dimension of $\vv X$ (see also \citep{zhao2024galore}).  The adjoint operator of $\mathcal P_t$ is $\mathcal P_t ^* (\vv X) = \vv P_t\tt \vv X \vv  Q_t \tt $. This can be verified by 
\bb 
\langle \vv P_t \vv X \vv Q_t, 
\vv Y \rangle  = \trace(\vv P_t \vv X \vv Q_t \vv Y\tt )
=  \trace(\vv X \vv Q_t \vv Y\tt \vv P_t  )
=  \trace(\vv X (\vv P_t\tt  \vv Y \vv Q_t\tt)\tt  )
= \langle\vv X,  \vv P_t\tt \vv Y \vv Q_t\tt 
 \rangle. 
\ee 
The subspace descent system of this operator is 
\bb
\begin{split}
& \ddt \vv W_t =    \vv P_t \partial_{\vv \hatS} H(\vv w_t, \vv \hatS_t) \vv Q_t - \Phi(\partial_{\vv W} H(\vv w_t, \hatvv s_t))  \\
&\ddt \vv  \hatS_t = -      \P_t\tt \partial_{\vv W} H(\vv w_t, \vv \hatS_t)) \vv Q_t\tt  - \Psi(\partial_{\vv \hatS} H(\vv w_t, \vv \hatS_t)) \\
&\ddt \vv P_t = \Gamma_P(\vv P_t, \vv Q_t, \dd L(\vv W_t)) \\ 
&\ddt \vv Q_t = \Gamma_Q(\vv P_t, \vv Q_t, \dd L(\vv W_t)), 
 \end{split}
\ee
where $\vv P_t, \vv Q_t$ can be updated jointly via an online SVD on $\vv G_t$.

Another  linear operator that involves two matrices
 is $\mathcal P_t(\vv X) = \vv P_t \vv X + \vv X  \vv Q_t$, which yields $\mathcal P_t^*(\vv X) = \vv P_t\tt \vv X + \vv X \vv Q_t\tt$.

\section{Experiment}
\label{sec::experiment}
We answer a number of key questions with pretraining experiments of LLaMA ~\citep{touvron2023llama} on the C4 dataset~\citep{t52019}. All experiments except for large 7B experiments are conducted on a \emph{single} NVIDIA A100 GPU.

\subsection{Why do we Need Online Subspace Descent?}
Overall, Online Subspace Descent offers two major advantages over previous methods that rely on SVD, better convergence and lower overhead. In this section, we discuss both in detail.

First, Online Subspace Descent closes the gap between the state-of-the-art low-rank method and full rank baseline uniformly across different model sizes, as shown in figure \ref{tab:highlight}. A highlight amongst these results is LLaMA 1B (SS 256). As shown in table \ref{tab:highlight}, Online Subspace Descent attains significant improvement over GaLore in perplexity, while consuming a similar amount of GPU memory (8.64 GB v.s 9.01 GB). One additional observation in \ref{tab:highlight} shows as model size and sequence length grow, Online Subspace Descent becomes more effective. We hypothesize that this is due to the higher intrinsic rank of the underlying optimization problem in larger models. Hence, the positive impact on the convergence of the online update of $\vv P_t$ becomes more obvious. See more details in Appendix~\ref{sec:exp_sup}.

\begin{figure}[H]
    \centering
    \vspace{-.5\baselineskip}
    \begin{minipage}{0.48\textwidth}
            \centering
            \resizebox{\linewidth}{!}{
            \begin{tabular}[t]{l c c c}
            \toprule
            \multirow{2}{*}{\textbf{Method}} & \multicolumn{3}{c}{\textbf{Perplexity}($\downarrow$)} \\
            \cmidrule{2-4}
             & \textbf{60M} & \textbf{350M} & \textbf{1B} \\
            \midrule
            8bit-AdamW (Full Rank) & 32.75 & 30.43 & 29.40 \\
            \midrule
             GaLore (Rank = 512) & 57.03 & 44.34 & 35.52 \\
            \textbf{Ours} (Rank = 512) & \textbf{56.12} & \textbf{43.67} & \textbf{31.30}\\
            \bottomrule
            \end{tabular}
            }
            \vspace{5pt}
            \captionof{table}{Pretraining LLaMA 1B with a sequence length of 256 and for 10K steps, perplexity was reported as the training average of the last 10 steps. AdamW8bit serves as the base optimizer for both.}
            \label{tab:highlight}
    \end{minipage}\hfill
    \begin{minipage}{0.48\textwidth}
        \centering
        \includegraphics[width = \textwidth]{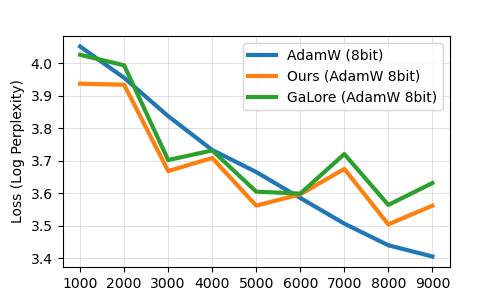}
        \caption{Validation perplexity of LLaMA 1B with sequence length 256, rank 512 for 10K steps. 
        }
    \label{fig:training_curve}
    \end{minipage}
    \vspace{-.5\baselineskip}    
\end{figure}



Another favorable characteristic of Online Subspace Descent is its minimum overhead. In figure \ref{fig:system}, we measure and analyze the execution time of SVD and online PCA on a popular data center GPU (A100) and a consumer GPU (RTX 3090). The typical Pytorch implementation of SVD can be up to $142$ times slower than running a single-step online PCA on representative weight tensors from LLaMA architectures. Online PCA is fast because it is implemented as a single optimization step with respect to a simple loss function. Hence, each step of online PCA can be cleverly scheduled and hidden in the weight optimization step when executed in parallel, whereas SVD is too expensive to be hidden. 

\begin{figure}[htbp!]
    \centering
    \begin{minipage}[b]{0.55\textwidth}
        \includegraphics[width=\textwidth]{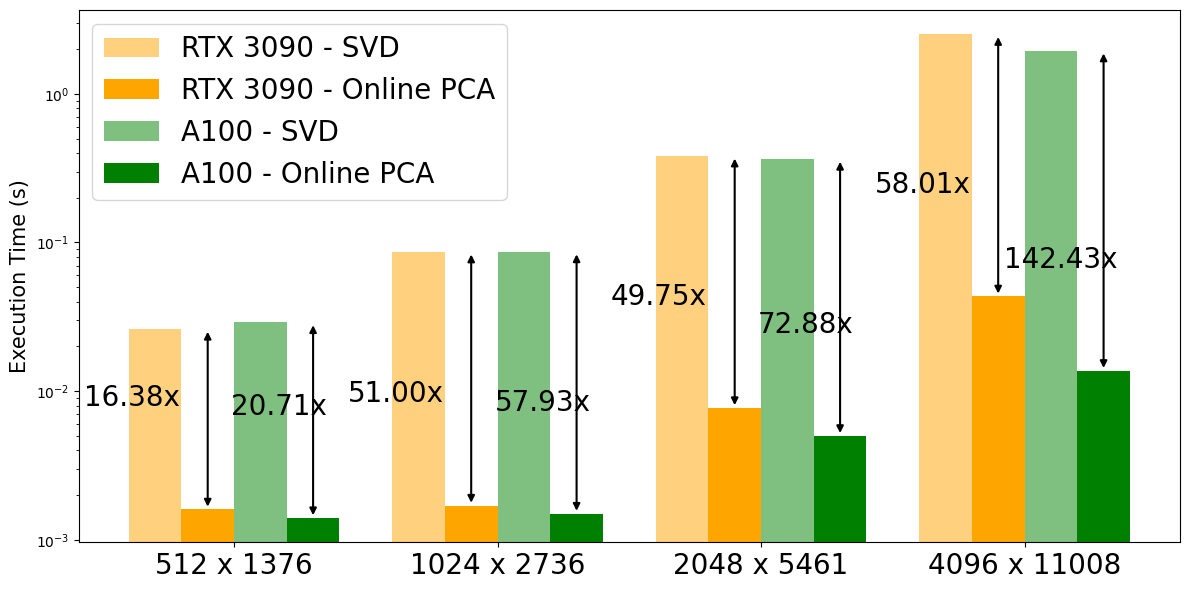}
    \end{minipage}~~~
    \begin{minipage}[b]{0.4\textwidth}
        \caption{The execution time of {\color{orange}{torch.svd}} and that a single-step {\color{orange} backward()} call for online PCA in PyTorch,
        on matrices of typical shapes in linear layers in the LLaMA 60M to 7B. 
        Thanks to the high speed of single-step online PCA, $\vv P_t$ updates can be executed in parallel with weight updates, adding no overhead to the training process. In contrast, SVD incurs significant overhead as the model and weight tensor sizes increase.}
        \label{fig:system}
    \end{minipage}
    \vspace{-0.5cm}
\end{figure}

\subsection{What Rank Should we Pick for \emph{Online Subspace Descent}?}
We conduct an ablation study on the rank of Online Subspace Descent. Figure \ref{fig:sweep} shows that the final perplexity is inversely correlated with rank: higher ranks result in lower convergent perplexity. However, the rate of reduction of perplexity decreases as the rank increases, eventually reaching a saturation point. We propose an intuitive explanation for this phenomenon. In language modeling, high-frequency tokens can be effectively learned with low-rank training. However, learning lower-frequency tokens requires higher ranks. Once these lower-frequency tokens are adequately learned, further increasing the rank does not significantly decrease perplexity. In conclusion, given sufficient time and resources, higher ranks yield better performance for \name{}. It is recommended that the highest rank be selected until the perplexity reduction saturates.

\subsection{What are the Best Hyperparameters?}

\textbf{$\alpha$ and $\lambda$}: The parameter $\alpha$ controls the update speed of $\vv P_t$, while $\lambda$ determines the regularization strength on the optimization objective of $\vv P_t$. Empirically, we find that the result is not sensitive to $\lambda$ for small models (60M).  and set $\lambda = 0.1$ for all subsequent experiments. 
We find that $\alpha$ must be kept small to avoid instability (Figure \ref{fig:sweep}), and we set $\alpha = 5$ for all experiments.

\textbf{Learning rate}: For the small model (60M), learning rate choices are more flexible, producing similar results. However, for larger models (350M, 1B), we recommend using a learning rate that is 10 times smaller, specifically 0.001. Larger learning rates cause unrecoverable spikes and instability, a general characteristic observed across all methods. See additional hyperparameter choices in Appendix \ref{sec:exp_sup}.

\begin{figure}[h]
    \centering
    \includegraphics[width=\textwidth]{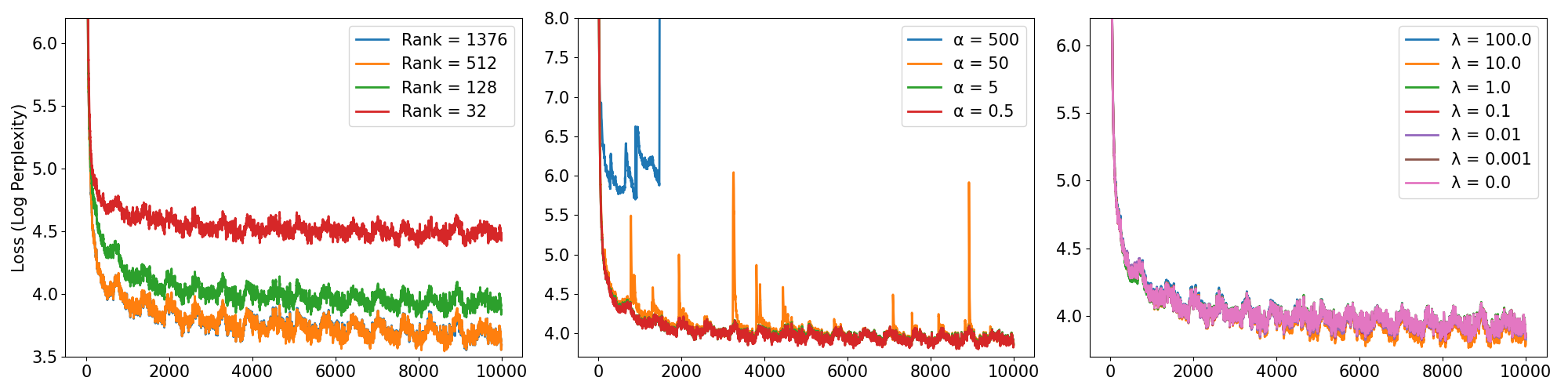}
    \caption{From left to right are loss curves of 10K steps on LLaMA 60M: leftmost is the sweep of rank, middle is the sweep of $\alpha$ and rightmost is the sweep of $\lambda$.
    }
    \label{fig:sweep}
\end{figure}

\subsection{Can \emph{Online Subspace Descent} be Applied to Different Optimizers?}
One straightforward extension of Online Subspace Descent is to apply it to other base optimizers beyond AdamW8bit. We conduct ablation studies on LION \citep{chen2024symbolic} and Adafactor \citep{shazeer2018adafactor}, finding that Online Subspace Descent behaves similarly to how it does with AdamW8bit. Despite the initial observation that updating $\vv P_t$ with AdamW8bit consistently yields better results, we discover that updating $P_t$ with simple SGD can achieve similar performance.

\begin{table}[h]
\centering
\begin{tabular}{lcccccc}
\toprule
\textbf{Method} & \multicolumn{2}{c}{GaLore} & \multicolumn{4}{c}{Ours} \\
\cmidrule(lr){2-3}\cmidrule(lr){4-7}
 &  Lion & Adaf. & Lion+Lion & Adaf.+Adaf. & Lion+AdamW &  Adaf.+AdamW \\
\midrule
\textbf{Perplexity} & 46.90 & 34.32 & 57.97 & 47.61 & \textbf{44.76} & \textbf{34.15} \\
\bottomrule
\end{tabular}
\vspace{5pt}
\caption{LLaMA 60M on C4 with sequence length 1024, with optimizers on  $\vv P_t$ and $\vv W_t$, denote as "Ours \{$\vv W_t$ optimizer\} + \{$\vv P_t$ optimizer\}". Adaf., and Adam refer to Adafactor and 8bit-AdamW, respectively.
}
\label{tab:optim_sweep}
\vspace{-0.5cm}
\end{table}

\subsection{Can Online Subspace Descent Scale to Larger Model?}
We pretrain from scratch a 7B LLaMA model on the C4 dataset for 10K steps, where the $\vv P_t$ matrix is updated by SGD. The perplexity is the lower the better. The final perplexity and training wall-clock time are provided in Table~\ref{tab:c4-ppl}. We further provide the downstream evaluation of the pretrained checkpoints using Galore and our method on the GLUE benchmark in Table~\ref{tab:glue}. Our method consistently outperforms Galore when the model size scales up.

\begin{table}[htbp!]
\centering
\begin{tabular}{lcc}
\toprule
\textbf{Method} & \textbf{Perplexity} & \textbf{Wall Clock Time (hours)} \\
\midrule
Galore & 51.21 & 9.7439 \\
Ours   & \textbf{43.72} & \textbf{7.1428} \\
\bottomrule
\end{tabular}
\vspace{5pt}
\caption{Perplexity and Wall Clock Time for 7B models pretrained on C4 for 10K steps. Lower perplexity is better. Online Subspace Descent can be upto 1.3x faster than GaLore.}
\label{tab:c4-ppl}
\end{table}

\begin{table}[htbp!]
\centering
\begin{tabular}{lcccccccc}
\toprule
\textbf{Method} & \textbf{MRPC} & \textbf{RTE} & \textbf{SST2} & \textbf{MNLI} & \textbf{QNLI} & \textbf{QQP} & \textbf{AVG} \\
\midrule
Galore & 0.6838 & \textbf{0.5018} & 0.5183 & 0.3506 & 0.4946 & 0.3682 & 0.4862 \\
Ours   & \textbf{0.6982} & 0.4901 & \textbf{0.5233} & \textbf{0.3654} & \textbf{0.5142} & \textbf{0.3795} & \textbf{0.4951} \\
\bottomrule
\end{tabular}
\vspace{5pt}
\caption{Standardized GLUE evaluation for 7B model checkpoints using eval-harness. Results are reported for various downstream tasks.}
\label{tab:glue}
\end{table}

\section{Related Works}
\label{sec::related}
We discuss related works on 
memory-efficient optimization and low-rank adaptation techniques.

\paragraph{Low-Rank Adaptation} Low-Rank Adaptation (LoRA)~\citep{hu2021lora} adds a low-rank adaptor to speficic linear layers in a model, and finetune only the low-rank adaptor. As the adaptors are small, LoRA is widely applied for finetuning large models. Many variants have been proposed since LoRA, including support for multi-task learning ~\citet{wang2023multilora} and further memory reductions \citet{dettmers2024qlora}. Notably, \citet{lialin2023relora} proposed ReLoRA for pretraining, requiring a full-rank training warmup to match standard performance levels. It's important to note that LoRA is fundamentally distinct from subspace descent. While subspace descent optimizes within the original model parameter space, LoRA focuses its optimization efforts within the space of the adaptors.

\paragraph{Memory-Efficient Optimization}
Several approaches aim to reduce memory costs associated with gradient statistics in adaptive optimization algorithms \cite{shazeer2018adafactor, anil2019memory, dettmers20218}. In particular, Adafactor~\citep{shazeer2018adafactor} factorizes the second-order statistics by a row-column outer product and update the factorized bases on the fly, hence achieving a sub-linear memory cost. K-Fac~\citep{martens2015optimizing} presents a factorized approximation of the Fisher information matrix which leads to a sublinear natural gradient method. More recently, \citet{feinberg2024sketchy} observes that the spectra of the Kronecker-factored gradient covariance matrix in deep learning (DL) training tasks are concentrated on a small
leading eigenspace and propose to maintain a matrix preconditioner using the frequent directions sketch. However, their method requires conducting the eigendecomposition at every step, which can be costly for large models. Other than factorization methods, quantization techniques~\citep{dettmers20218, alistarh2017qsgd, wen2017terngrad, liu2024communication} are also widely used, where the gradient (or the momentum and the preconditioner) are directly quantized to tradeoff performance for memory. Fused gradient computation method~\citep{lv2023full} have also been used to minimize memory costs during training. GaLore~\citep{zhao2024galore} is the most relevant work to ours. GaLore focuses on low-rank gradient structures, reducing memory costs for both first and second-order statistics. Our method can be viewed as a general extension to GaLore where we replace the infrequent SVD by a continuous subspace descent~\citep{larsen2021many, gur2018gradient}. As a result, our method not only provides a more general framework to study memory-efficient subspace descent, but is also more performant than GaLore in practice.

\section{Conclusion}
\label{sec::conclusion}

In conclusion, we provide the first convergence guarantee for arbitrary update rules of projection matrix, applicable to a range of optimizers that can be analyzed using Hamiltonian Descent, including common ones like LION, AdamW, and Adafactor. Inspired by this theoretical foundation, we introduce Dynamic Subspace Descent, a novel family of subspace descent optimizers that eschews SVD in favor of online PCA for updating projection matrix. Dynamic Subspace Descent is both flexible and minimally intrusive, and our experiments show that it achieves lower perplexity in pretraining LLaMA models (ranging from 60M to 1B parameters) on the C4 dataset compared to state-of-the-art low-rank training methods, while also closing the perplexity gap with full-rank baselines.

For future research, we propose several open and intriguing questions: (1) Are there alternative methods for updating projection matrix that could accelerate convergence? (2) What is the impact of weight decay on convergence in Dynamic Subspace Descent? (3) Can low-rank gradients and updates be combined with dynamic low-rank weights (e.g., Mixture of Experts) to further enhance training efficiency? (4) Can this method be applied to problems beyond language modeling? We hope that our work provides a strong foundation for exploring these questions.
\clearpage
\bibliographystyle{plainnat}
\bibliography{reference}
\clearpage


\appendix

\section{Experiments}
\label{sec:exp_sup}
\subsection{Hyperparameters}
We sweep learning rate from [0.01, 0.005, 0.001]. For GaLore as well as Adam8bit, we follow the recommended hyperparameters as much as possible. For instance, GaLore update gap is set to recommended default, $200$.  Warmup is set to $10\%$ of the total training steps. Batch size is set to $512$ and gradient clipping is set to $1.0$.

\subsection{Rank Sweep}
In the following table, is a sweep on different ranks and their final perplexity of LLaMA 60M (SS = 1024) on C4. All other hyperparameters are fixed and using recommended default. Notice that as the rank increases, both Dynamic Subspace Descent and GaLore improve.

\label{sec: rank_sweep}
\begin{table}[htbp!]
\centering
\label{tab:rank_sweep}
\begin{tabular}{l S[table-format=3.0] S[table-format=2.2]}
\toprule
\textbf{Rank} & \textbf{Perplexity (Ours)} & \textbf{Perplexity (GaLore)} \\
\midrule
32 & 85.90 & 86.16 \\
128 & 49.01 & 48.05 \\
512 & 37.41 & 36.93 \\
Full & 37.18 & 36.51 \\
\bottomrule
\end{tabular}
\caption{On LLaMA 60M SS 1024, we sweep across different ranks, the trend is clear and intuitive that higher rank is preferred when it's feasible.}
\end{table}



\subsection{Optimizer Sweep}
\label{sec:optim_sweep}

\begin{table}[h]
\centering
\label{tab:optim_sweep}
\begin{tabular}{l S[table-format=2.2] S[table-format=2.2]}
\toprule
\textbf{Method} & \textbf{Perplexity} \\
\midrule
Lion & 52.65 \\
Adafactor & 33.45 \\
Adam8bit & 29.77 \\
SGD & 3469.14 \\
\midrule
Galore LION & 46.90\\
Galore Adafactor & 34.32 \\
Galore AdamW8bit & 48.05\\
Ours LION + LION & 57.97 \\
Ours Adafactor + Adafactor & 47.61 \\
Ours AdamW8bit + AdamW8bit & 49.01 \\
Ours LION + Adam8bit & 44.76 \\
Ours Adafactor + AdamW8bit & 34.15 \\
Ours AdamW8bit + SGD & 53.53\\
\bottomrule
\end{tabular}
\caption{In this experiment, we train LLaMA 60M on C4 with sequence length of 1024. We combine different base optimizers to update both $P_t$ and $W_t$, denote as "Ours {weight optimizer} + {$P_t$ optimizer}".}
\end{table}



\end{document}